\newcommand{\bbD}{\mathbb{D}}
\newcommand{\R}{\mathbb{R}}
\newcommand{\N}{\mathbb{N}}
\newcommand{\bbS}{\mathbb{S}}
\newcommand{\Z}{\mathbb{Z}}
\newcommand{\mcD}{\mathcal{D}}
\newcommand{\mcE}{\mathcal{E}}
\newcommand{\mcS}{\mathcal{S}}
\newcommand{\inprod}[2]{\langle #1 | #2 \rangle}
\newcommand\restr[2]{{
  \left.\kern-\nulldelimiterspace 
  #1 
  \vphantom{\big|} 
  \right|_{#2} 
  }}
\DeclareMathOperator{\im}{im}
\DeclareMathOperator{\rank}{rank}
\newtheorem{lem}{Lemma}
\newtheorem{defn}{Definition}
\newtheorem{thm}{Theorem}
\newtheorem*{rem}{Remark}
\definecolor{darkgreen}{rgb}{0,0.5,0}
\definecolor{babyblue}{rgb}{0.2,0.39,0.95}
\newcommand{\multiplicity}{\textup{\text{mult}}}
\newcommand{\bottleneck}{\textup{\text{w}}_{\infty}}
\newcommand{\wasserstein}{\textup{\text{w}}}
\newcommand{\shrpns}{\sigma}
\newcommand{\centr}{\mu}
\newcommand{\vshrpns}{\boldsymbol{\sigma}}
\newcommand{\vcentr}{\boldsymbol{\mu}}
\renewcommand{\inprod}[2]{\langle #1 , #2 \rangle}
\title{Deep Learning with Topological Signatures}
\author{
  Christoph Hofer\\
  Department of Computer Science\\
  University of Salzburg, Austria\\
 \href{mailto:chofer@cosy.sbg.ac.at}{\texttt{chofer@cosy.sbg.ac.at}}\\
  \And
  Roland Kwitt\\
  Department of Computer Science\\
  University of Salzburg, Austria\\
  \href{mailto:Roland.Kwitt@sbg.ac.at}{\texttt{Roland.Kwitt@sbg.ac.at}} \\
  \And
  Marc Niethammer\\
  UNC Chapel Hill, NC, USA\\
  \href{mailto:mn@cs.unc.edu}{\texttt{mn@cs.unc.edu}} \\
  \And
  Andreas Uhl\\
  Department of Computer Science\\
  University of Salzburg, Austria\\
  \href{mailto:uhl@cosy.sbg.ac.at}{\texttt{uhl@cosy.sbg.ac.at}} \\
}
\begin{document}

\maketitle

\begin{abstract}
Inferring topological and geometrical information from data can offer an
alternative perspective on machine learning problems. Methods from 
topological data analysis, e.g., persistent homology, enable us to 
obtain such information, typically in the form of summary representations
of topological features. However, such topological signatures often 
come with an unusual structure (e.g., multisets of intervals) that is 
highly impractical for most machine learning techniques. While many 
strategies have been proposed to map these topological signatures into
machine learning compatible representations, they suffer from being agnostic to the target learning task. In contrast, we propose a
technique that enables us to input topological signatures to deep neural networks
and \emph{learn} a task-optimal representation during training. Our
approach is realized as a novel input layer with favorable theoretical properties.
Classification experiments on 2D object shapes and social network graphs demonstrate 
the versatility of the approach and, in case of the latter, we even outperform
the state-of-the-art by a large margin.
\end{abstract}

\section{Introduction}
\label{section:intro}

Methods from algebraic topology have only recently emerged in the machine learning community, most prominently under the term \emph{topological data analysis (TDA)} \cite{Carlsson09a}. Since TDA enables us to infer relevant topological and geometrical information from data, it can offer a novel and potentially beneficial perspective on various machine learning problems. Two compelling benefits of TDA are (1) its versatility, i.e., we are not restricted to any particular kind of data (such as images, sensor 
measurements, time-series, graphs, etc.) and (2) its robustness to noise.
Several works have demonstrated that TDA can be 
beneficial in a diverse set of problems, such as studying the 
manifold of natural image patches \cite{Carlsson12a}, analyzing activity 
patterns of the visual cortex \cite{Singh08a}, classification of 3D 
surface meshes \cite{Reininghaus14a,Li14a}, clustering \cite{Chazal13a}, or 
recognition of 2D object shapes \cite{Turner2013}.

Currently, the most widely-used tool from TDA is \emph{persistent homology} 
\cite{Edelsbrunner02a, Edelsbrunner2010}. Essentially\footnote{We will make
these concepts more concrete in Sec.~\ref{section:background}.}, persistent homology
allows us to track topological changes as we analyze data at multiple
``scales''. As the scale changes, topological features (such as connected components, holes, 
etc.) appear and disappear. Persistent homology associates a \emph{lifespan}
to these features in the form of a \emph{birth} and a \emph{death} time. 
The collection of (birth, death) tuples forms a multiset that can be visualized as a persistence diagram or a barcode, also referred to as a 
\emph{topological signature} of the data. However, leveraging these signatures for learning 
purposes poses considerable challenges, mostly due to their unusual structure as a
multiset. While there exist suitable metrics to compare 
signatures (e.g., the Wasserstein metric), they are highly 
impractical for learning, as they require solving optimal matching
problems. 

\noindent
\textbf{Related work.} In order to deal with these issues, several strategies 
have been proposed. In \cite{Adcock13a} for 
instance, Adcock et al. use invariant theory to ``coordinatize'' the space of 
barcodes. This allows to map barcodes to vectors of fixed size which
can then be fed to standard machine learning techniques, such as support vector machines (SVMs). 
Alternatively, Adams et al. \cite{Adams17a} map barcodes to so-called
\emph{persistence images} which, upon discretization, can also be 
interpreted as vectors and used with standard learning techniques.
Along another line of research, Bubenik \cite{Bubenik15a} proposes a mapping 
of barcodes into a Banach space. This has been shown to be particularly 
viable in a statistical context (see, e.g., \cite{Chazal15a}). The mapping 
outputs a representation referred to as a \emph{persistence landscape}. 
Interestingly, under a specific choice of parameters, barcodes are mapped 
into $L_2(\mathbb{R}^2)$ and the inner-product in that space can be used to
construct a valid kernel function. Similar, kernel-based techniques, 
have also recently been studied by Reininghaus et al. \cite{Reininghaus14a}, Kwitt et al. \cite{Kwitt15a} and Kusano et al. \cite{Kusano16a}.

While all previously mentioned approaches retain certain stability properties of the original representation with respect to common metrics in TDA (such as the Wasserstein or Bottleneck distances), they also share one common \emph{drawback}: the mapping of topological signatures to a representation that is compatible with existing learning techniques is \emph{pre-defined}. Consequently, it is fixed and therefore \emph{agnostic} to any specific learning task. This is clearly suboptimal, as 
the eminent success of deep neural networks (e.g., \cite{Krizhevsky12a,He16a}) 
has shown that \emph{learning} representations is
a preferable approach. Furthermore, techniques based on kernels 
\cite{Reininghaus14a,Kwitt15a,Kusano16a} for instance, additionally suffer scalability 
issues, as training typically scales poorly with the number of samples (e.g., 
roughly cubic in case of kernel-SVMs). 
In the spirit of
end-to-end training, we therefore aim for an approach that allows to 
learn a \emph{task-optimal} representation of topological signatures. 
We additionally remark that, e.g., Qi et al. \cite{Qi16a} or Ravanbakhsh et al. \cite{Ravanbakhsh17a}
have proposed architectures that can handle \emph{sets}, but only with fixed size.
In our context, this is impractical as the capability of handling sets with
varying cardinality is a requirement to handle persistent homology in a machine
learning setting.  
\vskip1ex
\noindent
\textbf{Contribution}. To realize this idea, we advocate a novel 
input layer for deep neural networks that takes a topological signature 
(in our case, a persistence diagram), and computes 
a parametrized projection that can be learned during network training. Specifically, 
this layer is designed such that its output is stable with respect to the 1-Wasserstein 
distance (similar to \cite{Reininghaus14a} or \cite{Adams17a}). 
To demonstrate the versatility of this approach, we present 
experiments on 2D object shape classification and the classification of 
social network graphs. On the latter, we improve the state-of-the-art by
a large margin, clearly demonstrating the power of combining TDA with 
deep learning in this context.

\begin{figure}
\begin{center}
\includegraphics[width=1.0\textwidth]{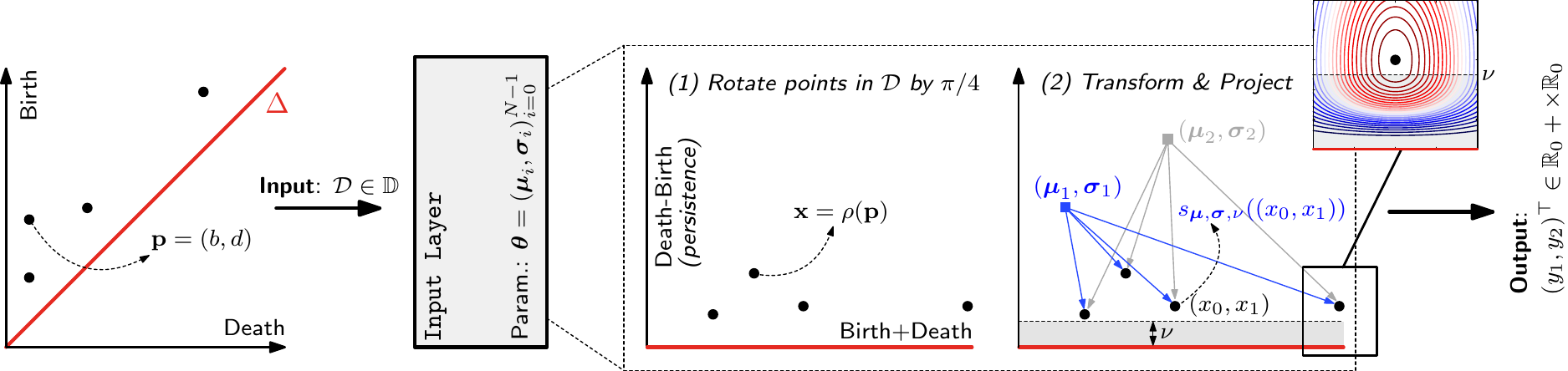}
\end{center}
\caption{Illustration of the proposed network \emph{input layer} for topological signatures. 
Each signature, in the form of a persistence diagram $\mathcal{D} \in \mathbb{D}$ (\emph{left}), is projected w.r.t. a collection of \emph{structure elements}. The layer's learnable parameters $\boldsymbol{\theta}$ are the locations $\boldsymbol{\mu}_i$ and the scales $\boldsymbol{\sigma}_i$ of these elements; $\nu \in \R^+$ is set a-priori and meant to discount the impact of points 
with low persistence (and, in many cases, of low discriminative power). The layer output $\mathbf{y}$ is a concatenation of the projections. In this illustration, $N=2$ and hence $\mathbf{y} = (y_1,y_2)^\top$.\label{fig:idea}} 
\end{figure}

\vspace{-10pt}
\section{Background}
\label{section:background}

For space reasons, we only provide a brief overview of the concepts that are
relevant to this work and refer the reader to \cite{Hatcher2002} or 
\cite{Edelsbrunner2010} for further details. 

{\bf Homology. }
The key concept of homology theory is to study the properties of some object $X$ by means of (commutative) algebra. In particular, we assign to $X$ a sequence of modules $C_0,\ C_1, \dots$ which 
are connected by homomorphisms $\partial_n: C_n \rightarrow C_{n-1}$ such that 
$\im \partial_{n+1} \subseteq \ker \partial_{n}$. A structure of this form is called a \emph{chain complex} and by studying its homology groups 
$H_n = \ker \partial_{n} / \im \partial_{n+1}$ we can derive properties of $X$. 

\vskip0.5ex
A prominent example of a homology theory is \emph{simplicial homology}.
Throughout this work, it is the used homology theory and hence we will now 
concretize the already presented ideas. 
Let $K$ be a simplicial complex and $K_n$ its $n$-skeleton. Then we set $C_n(K)$ as the
vector space generated (freely) by $K_n$ over $\Z/2\Z$\footnote{Simplicial homology is not specific 
to $\Z/2\Z$, but it's a typical choice, since it allows us to interpret $n$-chains as sets
of $n$-simplices.}.
The connecting homomorphisms $\partial_n:C_n(K) \rightarrow C_{n-1}(K)$ 
are called boundary operators. For a simplex  
$\sigma = [x_0, \dots, x_n] \in K_n$, we define them as 
$\partial_n(\sigma) =  \sum_{i=0}^n [x_0, \dots, x_{i-1}, x_{i+1}, \dots, x_n]$
and linearly extend this to  $C_n(K)$, i.e., $\partial_n(\sum \sigma_i) = \sum \partial_n(\sigma_i)$.
{\bf Persistent homology.}
Let $K$ be a simplicial complex and $(K^i)_{i=0}^{m}$ a sequence of simplicial complexes such that $\emptyset=K^0 \subseteq K^1 \subseteq \dots \subseteq K^m = K$. Then, $(K^i)_{i=0}^{m}$ is called a \emph{filtration} of $K$. If we use the extra information provided by the filtration of $K$, we obtain the following
sequence of chain complexes (\emph{left}), 
\begin{center}
\includegraphics[width=\textwidth]{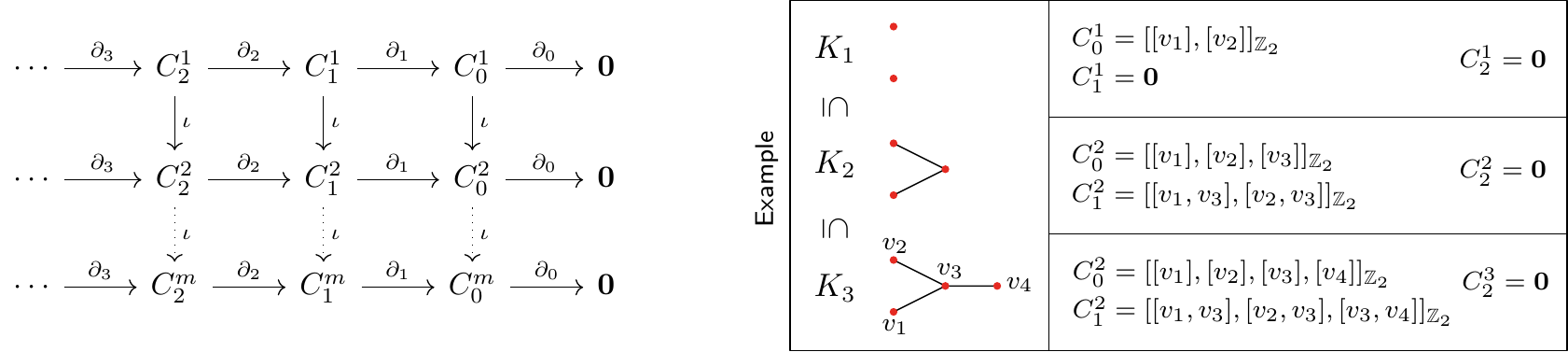}
\end{center}
where $C_n^i = C_n(K_n^i)$
and $\iota$ denotes the inclusion.
This then leads to the concept of \emph{persistent homology groups}, defined by
\[
H_n^{i,j} = \ker \partial_{n}^i / (\im \partial_{n+1}^j \cap \ker \partial_n^i) \quad\text{for}\quad i \leq j\enspace.
\]
The ranks, $\beta_n^{i,j} = \rank H_n^{i,j}$, of these homology groups (i.e., the \emph{$n$-th 
persistent Betti numbers}), capture the number of homological features of dimensionality 
$n$ (e.g., connected components for $n=0$, holes for $n=1$, etc.) that persist from $i$ to (at least) $j$.
In fact, according to \cite[Fundamental Lemma of Persistent Homology]{Edelsbrunner2010},
the quantities
\begin{equation}
\mu_n^{i,j} = (\beta_n^{i, j-1} - \beta_n^{i,j}) - (\beta_n^{i-1, j-1} - \beta_n^{i-1, j}) 
\quad\text{for}\quad i < j
\label{eqn:munij}
\end{equation}
encode all the information about the persistent Betti numbers of dimension $n$.

{\bf Topological signatures}. 
A typical way to obtain a filtration of $K$ is to consider sublevel sets
of a function $f:C_0(K)\rightarrow \R$. This function can be easily lifted to higher-dimensional
chain groups of $K$ by 
\[
f([v_0, \dots, v_n]) = \max \{f([v_i]): 0 \leq i \leq n\}\enspace.
\]
Given $m = |f(C_0(K))|$, we obtain $(K_i)_{i=0}^m$ by setting $K_0 =\emptyset$ and
$K_i = f^{-1}((-\infty, a_i])$ for $1 \leq i \leq m$, where $a_1 < \cdots < a_m$
is the sorted sequence of values of $f(C_0(K))$. 
If we construct a multiset such that, for $i<j$, the 
point $(a_i, a_j)$ is inserted with multiplicity $\mu_n^{i,j}$, we effectively
encode the persistent homology of dimension $n$ w.r.t. the sublevel set 
filtration induced by $f$.
Upon adding diagonal points with infinite multiplicity, we obtain the following structure:

\vskip1ex
\begin{defn}[Persistence diagram]
    Let $\Delta = \{x \in \R^2_{\Delta}: \multiplicity(x) = \infty\}$ be the multiset of the 
    diagonal $\R^2_{\Delta} = \{(x_0, x_1) \in \R^2: x_0 = x_1\}$, where $\multiplicity$ denotes the multiplicity function and let 
    $\R^2_{\star} = \{(x_0, x_1) \in \R^2: x_1 > x_0\}$. 
    A persistence diagram, $\mcD$, is a multiset of the form
    \[
    \mcD = \{x: x \in \R^2_{\star}\} \cup \Delta\enspace.
    \]
    We denote by $\bbD$ the set of all persistence diagrams of the 
    form $|\mcD \setminus \Delta|<\infty\enspace.$
    \label{defn:pd}
\end{defn}
For a given complex $K$ of dimension $n_{\max}$ and a function $f$ (of the discussed form),  
we can interpret persistent homology as a mapping $(K, f) \mapsto (\mcD_0, \dots, \mcD_{n_{\max}-1})$, 
where $\mcD_i$ is the diagram of dimension $i$ and $n_{\max}$ the dimension of $K$.
We can additionally add a metric structure to the space of persistence diagrams by 
introducing the notion of distances.
\begin{defn}[Bottleneck, Wasserstein distance]
    For two persistence diagrams $\mcD$ and $\mcE$, 
    we define their Bottleneck ($\bottleneck$) and Wasserstein ($\wasserstein_p^q$)  distances by        
    \begin{equation}     
    \bottleneck(\mcD, \mcE) = \inf\limits_{\eta} \sup\limits_{\mathbf{x} \in \mcD} ||\mathbf{x} - \eta(\mathbf{x})||_{\infty}
    ~~\text{and}~~
    \wasserstein_p^q(\mcD, \mcE) = \inf\limits_{\eta} \left(\ \sum\limits_{\mathbf{x} \in \mcD} ||\mathbf{x} - \eta(\mathbf{x})||_q^p\enspace\right)^{\frac{1}{p}},   
    \label{eqn:wassersteinbottleneck}
    \end{equation}
    where  $p, q \in \N$ and the infimum is taken over all bijections  $\eta:\mcD\rightarrow\mcE$.    
\end{defn}

Essentially, this facilitates studying stability/continuity properties of topological 
signatures
w.r.t. metrics in the filtration or complex space; we refer the 
reader to \cite{Cohen-Steiner2007},\cite{Cohen-Steiner2010}, \cite{Chazal2009} for a selection of important stability results. 

\begin{rem}
  By setting $\mu_n^{i, \infty} = \beta_n^{i, m} - \beta_n^{i-1, m}$, we extend Eq.~\eqref{eqn:munij} to features which never disappear, also referred to as essential. This change can be lifted to $\bbD$ by setting
  $\R^2_{\star} = \{(x_0, x_1) \in \R \times ( \R\cup  \{\infty\}): x_1 > x_0\}$.
  In Sec.~\ref{section:experiments}, we will see that essential features can offer discriminative 
  information.
\end{rem}

\section{A network layer for topological signatures}

In this section, we introduce the proposed (parametrized) network layer for 
topological signatures (in the form of persistence diagrams).
The key idea is to take any $\mcD$ and define a projection w.r.t. a collection 
(of fixed size $N$) of \emph{structure elements}. 

In the following, we set 
$\R^+:=\{ x \in \R: x > 0\}$ and $\R^+_0:=\{ x \in \R: x \geq 0\}$, resp., and 
start by rotating points of $\mcD$ such that points on $\R^2_{\Delta}$ lie on 
the $x$-axis, see Fig.~\ref{fig:idea}. The $y$-axis can then be interpreted as
the \emph{persistence} of features.
Formally, we let $\mathbf{b}_0$ and $\mathbf{b}_1$ 
be the unit vectors 
in directions $(1,1)^\top$ and 
$(-1, 1)^\top$ and define a mapping $\rho: \R^2_{\star} \cup \R^2_{\Delta} \rightarrow \R \times \R_0^+$
such that $\mathbf{x} \mapsto (\inprod{\mathbf{x}}{\mathbf{b}_0}, \inprod{\mathbf{x}}{\mathbf{b}_1})$.
This rotates points in $\R_{\star} \cup \R^2_{\Delta} $ clock-wise by $\pi/4$. 
We will later see that this construction is 
beneficial for a closer analysis of the layers' properties.
Similar to \cite{Reininghaus14a,Kusano16a},  
we choose exponential functions as structure elements, but other choices are possible (see Lemma~\ref{lem:lemma1}). Differently 
to \cite{Reininghaus14a,Kusano16a}, however, our structure elements \emph{are not} at 
fixed locations (i.e., one element per point in $\mathcal{D}$), but their 
locations and scales are learned during training.
\begin{defn}
  Let $\vcentr =(\mu_0, \mu_1)^\top \in \R \times \R^+, \vshrpns = (\shrpns_0, \shrpns_1)\in \R^+ \times \R^+$ and $\nu \in \R^+$. We define
  \[
  s_{\vcentr, \vshrpns, \nu}: \R \times \R_0^+ \rightarrow \R
  \] 
  as follows:
  \begin{equation}
    s_{\vcentr, \vshrpns, \nu}\big((x_0,x_1)\big)
    =
    \left\{
    \begin{array}{ll}
    
    e^{-\shrpns_0^2(x_0-\centr_0)^2 - \shrpns_1^2(x_1 -\centr_1)^2}, & x_1 \in [\nu, \infty) \\
    \\
    e^{-\shrpns_0^2(x_0-\centr_0)^2 - \shrpns_1^2(\ln(\frac{x_1}{\nu})\nu + \nu -\centr_1)^2}, & x_1 \in (0, \nu) \\
    \\
    0, &x_1 = 0
    \end{array}
    \right.
  \label{eqn:structuredef}
  \end{equation}
  A persistence diagram $\mathcal{D}$ is then projected w.r.t. $s_{\vcentr, \vshrpns, \nu}$ via
  \begin{equation}
    S_{\vcentr, \vshrpns, \nu}: \bbD \rightarrow \R, \quad\quad \mcD \mapsto \sum\limits_{\mathbf{x} \in \mcD} s_{\vcentr, \vshrpns, \nu} (\rho (\mathbf{x})) \enspace.
    \label{eqn:layerdefinition2}
  \end{equation}
\end{defn}

\begin{rem}
Note that
  $s_{\vcentr, \vshrpns, \nu}$ is continuous in $x_1$ as
  $$\lim\limits_{x\rightarrow \nu} x = \lim\limits_{x\rightarrow \nu} \ln\left(\frac{x}{\nu}\right)\nu + \nu \quad \text{and} \quad \lim\limits_{x_1\rightarrow 0} s_{\vcentr, \vshrpns, \nu}\big((x_0, x_1)\big) 
  = 0 
  = s_{\vcentr, \vshrpns, \nu}\big((x_0, 0)\big)\enspace$$
  and $e^{(\cdot)}$ is continuous. Further, $s_{\vcentr, \vshrpns, \nu}$ is differentiable on $\R \times \R^+$, since

 \[
 1  = \lim\limits_{x\rightarrow \nu^{+}} \frac{\partial x_1}{\partial x_1}(x) 
  \quad \text{and} \quad
  \lim\limits_{x\rightarrow \nu^-} \frac{\partial\left(\ln\left(\frac{x_1}{\nu}\right)\nu + \nu\right)}{\partial x_1}(x)  =
  \lim\limits_{x\rightarrow \nu^-}  \frac{\nu}{x} =
  1\enspace.
  \]
\end{rem}
Also note that we use the log-transform in Eq.~\eqref{eqn:layerdefinition2} to guarantee
that $s_{\vcentr, \vshrpns, \nu}$ satisfies the conditions of Lemma~\ref{lem:lemma1}; this is, however, 
only one possible choice.
Finally, given a collection of structure elements $S_{\vcentr_i, \vshrpns_i, \nu}$, we  
combine them to form the output of the network layer.

\begin{defn}
\label{defn:layer}
  Let $N \in \N$, $\boldsymbol{\theta} = (\vcentr_i, \vshrpns_i)_{i=0}^{N-1} \in \big((\R \times \R^+) \times (\R^+ \times \R^+)\big)^N$ and $\nu \in \R^+$. We define  
  $$
  \mcS_{\boldsymbol{\theta}, \nu}: \bbD \rightarrow (\R^+_0)^N \quad \mcD \mapsto \big(S_{\vcentr_i, \vshrpns_i, \nu}(\mcD)\big)_{i=0}^{N-1}.
  $$
as the concatenation of all $N$ mappings defined in Eq.~\eqref{eqn:layerdefinition2}.
\end{defn}
Importantly, a network layer implementing Def.~\ref{defn:layer} is trainable via backpropagation, as 
(1) $s_{\vcentr_i, \vshrpns_i, \nu}$ is differentiable in $\vcentr_i,\vshrpns_i$, (2) $S_{\vcentr_i, \vshrpns_i, \nu}(\mcD)$ is a finite sum of $s_{\vcentr_i, \vshrpns_i, \nu}$  and (3) $\mcS_{\boldsymbol{\theta}, \nu}$ is just a concatenation. 

\section{Theoretical properties}
\label{section:theory}

In this section, we demonstrate that the proposed layer is stable w.r.t. the 1-Wasserstein distance 
$\wasserstein_1^q$, see Eq.~\eqref{eqn:wassersteinbottleneck}. In fact, this claim will follow 
from a more general result, stating sufficient conditions on functions $s:  \R^2_{\star} \cup \R^2_{\Delta} \rightarrow \R_0^+$ such that a construction in the form of Eq.~\eqref{eqn:structuredef} is 
stable w.r.t. $\wasserstein_1^q$.

\begin{lem}
\label{lem:lemma1}
  Let 
  \[
  s:  \R^2_{\star} \cup \R^2_{\Delta} \rightarrow \R_0^+
  \] have the following properties:
  \begin{enumerate}[label=(\roman*),ref=(\roman*)]
    \item \label{lem:lemma1:prop1} $s$ is Lipschitz continuous w.r.t. $\|\cdot\|_q$ and constant $K_s$
    \item \label{lem:lemma1:prop2} $s(\mathbf{x}\big) = 0$,~~for $\mathbf{x} \in \R^2_{\Delta}$
  \end{enumerate}
   Then, for two persistence diagrams $\mcD, \mcE \in \bbD$, it holds that
  \begin{equation}
  \label{thm:stability:1}
  \left|\sum\limits_{x \in \mcD} s(x) - \sum\limits_{y \in \mcE} s(y) \right| \leq K_s \cdot \wasserstein_1^q(\mcD, \mcE)\enspace.
  \end{equation}
\end{lem}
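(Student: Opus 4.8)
The plan is to use the bijective structure built into the definition of $\wasserstein_1^q$ so that the difference of the two sums collapses into a single sum indexed by $\mcD$, after which Lipschitz continuity turns each summand into a distance and the infimum over bijections recovers the Wasserstein distance. First I would record that both sums are finite: by property \ref{lem:lemma1:prop2}, $s$ vanishes on $\R^2_\Delta$, so only the off-diagonal points contribute, and since $\mcD, \mcE \in \bbD$ we have $|\mcD \setminus \Delta| < \infty$ and $|\mcE \setminus \Delta| < \infty$. Thus $\sum_{x \in \mcD} s(x)$ and $\sum_{y \in \mcE} s(y)$ are finite sums of non-negative reals.

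Next, I would fix an arbitrary bijection $\eta : \mcD \rightarrow \mcE$ and reindex the second sum along $\eta$. Because $\eta$ is a bijection and both series converge, $\sum_{y \in \mcE} s(y) = \sum_{x \in \mcD} s(\eta(x))$, so that
\[
\sum_{x \in \mcD} s(x) - \sum_{y \in \mcE} s(y) = \sum_{x \in \mcD} \big(s(x) - s(\eta(x))\big).
\]
Only finitely many summands on the right are nonzero, since $s(x) - s(\eta(x)) = 0$ whenever both $x$ and $\eta(x)$ lie on the diagonal, and this fails only on the off-diagonal points of $\mcD$ together with their counterparts $\eta^{-1}(\mcE \setminus \Delta)$, a finite set. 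Applying the triangle inequality and then property \ref{lem:lemma1:prop1} termwise yields
\[
\left| \sum_{x \in \mcD} \big(s(x) - s(\eta(x))\big) \right| \le \sum_{x \in \mcD} |s(x) - s(\eta(x))| \le K_s \sum_{x \in \mcD} \|x - \eta(x)\|_q.
\]

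Finally, since this bound holds for every bijection $\eta$, I would take the infimum over all such $\eta$ on the right-hand side, which is precisely $K_s \cdot \wasserstein_1^q(\mcD, \mcE)$ by Eq.~\eqref{eqn:wassersteinbottleneck} with $p = 1$, giving \eqref{thm:stability:1}; the case $\wasserstein_1^q(\mcD, \mcE) = \infty$ is trivial, so it suffices to consider bijections of finite transport cost. The only step that is more than routine is justifying the reindexing and the splitting $\sum_{x \in \mcD}(s(x) - s(\eta(x))) = \sum_{x \in \mcD} s(x) - \sum_{y \in \mcE} s(y)$: this is exactly where property \ref{lem:lemma1:prop2} does the work, since vanishing of $s$ on the diagonal confines all nonzero contributions to a finite set and makes both series absolutely convergent, so the rearrangement is legitimate.
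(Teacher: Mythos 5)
Your proof is correct and follows the same core mechanism as the paper's: reindex the second sum along a bijection, use property (ii) to confine all nonzero contributions to a finite set, apply Lipschitz continuity termwise, and recover the Wasserstein distance. The one genuine difference is how the infimum is handled. The paper fixes a bijection $\varphi$ that \emph{realizes} $\wasserstein_1^q(\mcD,\mcE)$, which forces it to argue that $\restr{\varphi}{D}=\mathrm{id}$ on the diagonal-to-diagonal part $D$ and tacitly assumes a minimizer exists. You instead bound the difference for an \emph{arbitrary} bijection $\eta$ and pass to the infimum only at the end; this sidesteps both the attainment question and the need for $\eta$ to fix diagonal points, since $s(x)-s(\eta(x))=0$ already whenever $x$ and $\eta(x)$ both lie on $\R^2_\Delta$, regardless of whether $\eta(x)=x$. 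Your version is the slightly more robust formulation of the same argument; the paper's explicit four-way decomposition $A\cup B\cup C\cup D$ is just a more verbose bookkeeping of the finiteness observation you make directly.
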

\begin{proof}\renewcommand{\qedsymbol}{}
see Appendix~\ref{appendix:proofs}
\end{proof}
\vspace{-5pt}

\begin{rem}
 	At this point, we want to clarify that Lemma \ref{lem:lemma1} is \emph{not} specific 
 	to $s_{\vcentr, \vshrpns, \nu}$ (e.g., as in Def.~\ref{eqn:structuredef}). Rather, 
	Lemma \ref{lem:lemma1} yields sufficient 
 	conditions to construct a $\wasserstein_1$-stable input layer.
	Our choice of $s_{\vcentr, \vshrpns, \nu}$ is just a natural example that fulfils those requirements and, hence, 
 	$\mcS_{\boldsymbol{\theta}, \nu}$ is just \emph{one} possible representative of a whole family
 	of input layers. 

\end{rem}

With the result of Lemma~\ref{lem:lemma1} in mind, we turn to the specific case of 
$\mcS_{\boldsymbol{\theta}, \nu}$ and analyze its stability properties w.r.t. $\wasserstein_1^q$.
The following lemma is important in this context.

\begin{lem}
\label{lem:boundedfirstorderderivatives}
  $s_{\vcentr, \vshrpns, \nu}$ has absolutely bounded first-order partial derivatives w.r.t. $x_0$ and $x_1$ on $\R \times \R^{+}$. 
\end{lem}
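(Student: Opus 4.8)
The plan is to differentiate $s_{\vcentr,\vshrpns,\nu}$ piecewise and bound each piece separately. First I would observe that the map factorizes on each region: on $x_1\in[\nu,\infty)$ it equals $e^{-\shrpns_0^2(x_0-\centr_0)^2}\cdot e^{-\shrpns_1^2(x_1-\centr_1)^2}$, and on $x_1\in(0,\nu)$ it equals $e^{-\shrpns_0^2(x_0-\centr_0)^2}\cdot e^{-\shrpns_1^2(w(x_1)-\centr_1)^2}$ with $w(x_1)=\nu\ln(x_1/\nu)+\nu$ and $w'(x_1)=\nu/x_1$. Each factor is smooth on the respective open region, so the chain rule applies; differentiability across $x_1=\nu$ and at the seam is already granted by the remark following Eq.~\eqref{eqn:layerdefinition2}. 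Since both $x_1$-factors are bounded by $1$, the $x_0$-derivative satisfies $|\partial_{x_0}s_{\vcentr,\vshrpns,\nu}|\le 2\shrpns_0^2|x_0-\centr_0|\,e^{-\shrpns_0^2(x_0-\centr_0)^2}$ on all of $\R\times\R^+$, and the elementary estimate $\sup_{u\in\R}2a^2|u|\,e^{-a^2u^2}=\sqrt{2}\,a\,e^{-1/2}$ (attained at $|u|=1/(a\sqrt2)$) yields a finite constant depending only on $\shrpns_0$.

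For the $x_1$-derivative I would split into the two regions. On $x_1>\nu$ the identical Gaussian-moment argument gives $|\partial_{x_1}s_{\vcentr,\vshrpns,\nu}|\le 2\shrpns_1^2|x_1-\centr_1|\,e^{-\shrpns_1^2(x_1-\centr_1)^2}\le\sqrt{2}\,\shrpns_1\,e^{-1/2}$. On $0<x_1<\nu$ the chain rule introduces the extra factor $w'(x_1)=\nu/x_1$, so (again bounding the $x_0$-factor by $1$) $|\partial_{x_1}s_{\vcentr,\vshrpns,\nu}|\le 2\shrpns_1^2|w(x_1)-\centr_1|\,\frac{\nu}{x_1}\,e^{-\shrpns_1^2(w(x_1)-\centr_1)^2}$. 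This factor diverges as $x_1\to 0^+$, so boundedness is not immediate here; this is precisely where the log-transform does its work.

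To settle the lower region I would substitute $t=\ln(x_1/\nu)\in(-\infty,0)$, so that $\nu/x_1=e^{-t}$ and $w(x_1)=\nu(t+1)$, reducing the bound to the single-variable function $g(t)=2\shrpns_1^2\,|\nu(t+1)-\centr_1|\,e^{-t}\,e^{-\shrpns_1^2(\nu(t+1)-\centr_1)^2}$. As $t\to-\infty$ the Gaussian factor decays like $e^{-\shrpns_1^2\nu^2 t^2}$ and dominates both the linear factor and the exponential growth $e^{-t}$, so $g(t)\to 0$; as $t\to 0^-$ every factor is finite. Being continuous on $(-\infty,0)$ with finite limits at both endpoints, $g$ is bounded by some finite constant, which bounds $|\partial_{x_1}s_{\vcentr,\vshrpns,\nu}|$ on $0<x_1<\nu$. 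Combining this with the bound on $x_1>\nu$ and the matching one-sided derivatives at $x_1=\nu$ gives a uniform bound on $\R\times\R^+$ for both partial derivatives.

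The main obstacle is exactly the competition inside $g$ near $x_1=0$ between the blow-up $\nu/x_1$ and the Gaussian decay of $e^{-\shrpns_1^2(w(x_1)-\centr_1)^2}$; the log-substitution makes this transparent and confirms that the decay wins. Everything else reduces to the routine Gaussian-moment estimate $\sup_u |u|e^{-a^2u^2}<\infty$.
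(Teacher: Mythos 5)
Your proof is correct, and it follows the same overall structure as the paper's (split at $x_1=\nu$, observe that the only non-routine issue is the behaviour of $\partial_{x_1}s_{\vcentr,\vshrpns,\nu}$ as $x_1\to 0^+$), but it resolves the critical step by a different device. The paper expands the derivative on $(0,\nu)$ into two summands, one proportional to $\ln(\tfrac{x_1}{\nu})\cdot\tfrac{\nu}{x_1}\cdot e^{(\dots)}$ and one to $\tfrac{1}{x_1}\cdot e^{(\dots)}$, and disposes of each via a separate technical lemma (Lemma~\ref{lem:exponential_limits}) proved with de l'H\^opital's rule; it then concludes boundedness from continuity plus vanishing limits at the boundary. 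You instead substitute $t=\ln(x_1/\nu)$, which turns the whole bound into a single one-variable function $g(t)=2\shrpns_1^2|\nu(t+1)-\centr_1|\,e^{-t}e^{-\shrpns_1^2(\nu(t+1)-\centr_1)^2}$ on $(-\infty,0)$ and makes the competition between the $\nu/x_1$ blow-up and the Gaussian decay transparent: $e^{-\shrpns_1^2\nu^2t^2}$ beats $e^{-t}$ times a linear factor, with no l'H\^opital needed. This buys a more elementary and self-contained argument (it subsumes the auxiliary lemma), and your explicit Gaussian-moment estimate $\sup_{u}2a^2|u|e^{-a^2u^2}=\sqrt{2}\,a\,e^{-1/2}$ yields concrete bounds on the easy regions where the paper only argues qualitatively from vanishing limits. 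The one point to state a little more carefully is the final inference ``continuous on $(-\infty,0)$ with finite limits at both endpoints, hence bounded'': that is true, but since the interval is unbounded on the left you should say explicitly that the limit $0$ at $-\infty$ gives boundedness outside a compact set and continuity on the remaining compact piece gives the rest — which is exactly the gloss you implicitly intend, and the same gloss the paper's own closing sentence requires.
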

\vspace{-10pt}
\begin{proof}\renewcommand{\qedsymbol}{}
see Appendix~\ref{appendix:proofs}
\end{proof}
\begin{thm}
\label{cor:stability}$\mcS_{\boldsymbol{\theta}, \nu}$ is Lipschitz continuous with 
respect to $\wasserstein_1^q$ on $\bbD$. 
\end{thm}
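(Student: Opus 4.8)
The plan is to deduce the theorem from Lemma~\ref{lem:lemma1} applied to each structure element separately, and then to assemble the $N$ scalar outputs. Concretely, for a fixed index $i$ I write $\tilde{s}_i := s_{\vcentr_i, \vshrpns_i, \nu} \circ \rho : \R^2_\star \cup \R^2_\Delta \to \R_0^+$, so that $S_{\vcentr_i, \vshrpns_i, \nu}(\mcD) = \sum_{\mathbf{x} \in \mcD} \tilde{s}_i(\mathbf{x})$ is precisely a sum of the type appearing in Lemma~\ref{lem:lemma1}. If I can verify that $\tilde{s}_i$ satisfies hypotheses \ref{lem:lemma1:prop1} and \ref{lem:lemma1:prop2}, then Lemma~\ref{lem:lemma1} immediately gives $|S_{\vcentr_i,\vshrpns_i,\nu}(\mcD) - S_{\vcentr_i,\vshrpns_i,\nu}(\mcE)| \le K_i \, \wasserstein_1^q(\mcD, \mcE)$ for every pair $\mcD, \mcE \in \bbD$, where $K_i$ is a Lipschitz constant of $\tilde{s}_i$.

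Checking hypothesis \ref{lem:lemma1:prop2} is the easy half. Since $\mathbf{b}_1$ is the unit vector in direction $(-1,1)^\top$, any diagonal point $(t,t) \in \R^2_\Delta$ satisfies $\inprod{(t,t)}{\mathbf{b}_1} = 0$, so $\rho$ maps $\R^2_\Delta$ into the $x$-axis $\R \times \{0\}$; by the third case of Eq.~\eqref{eqn:structuredef}, $s_{\vcentr_i,\vshrpns_i,\nu}$ vanishes there, hence $\tilde{s}_i \equiv 0$ on $\R^2_\Delta$. For hypothesis \ref{lem:lemma1:prop1} I would argue that $\tilde{s}_i$ is Lipschitz w.r.t. $\|\cdot\|_q$ as a composition of two Lipschitz maps. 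The rotation $\rho$ is linear and orthogonal, hence an isometry for $\|\cdot\|_2$ and, by equivalence of norms on $\R^2$, Lipschitz for $\|\cdot\|_q$ with some constant $c_q$. It then remains to show that $s_{\vcentr_i,\vshrpns_i,\nu}$ itself is Lipschitz on all of $\R \times \R_0^+$, which is where Lemma~\ref{lem:boundedfirstorderderivatives} does most of the work: on the open convex half-plane $\R \times \R^+$ the partial derivatives are absolutely bounded, so by the mean value theorem along segments (which stay inside the convex open set) together with Hölder's inequality, $s_{\vcentr_i,\vshrpns_i,\nu}$ is Lipschitz w.r.t. $\|\cdot\|_q$ on $\R \times \R^+$ with a constant controlled by the dual norm of the gradient bound.

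The one genuine subtlety, and where I expect the main obstacle, is extending this Lipschitz estimate up to the boundary line $x_1 = 0$, which Lemma~\ref{lem:boundedfirstorderderivatives} explicitly excludes. I would handle it by a limiting argument: for two points with at least one strictly positive $x_1$-coordinate, the relative interior of the connecting segment lies in $\R \times \R^+$, so the interior estimate combined with the continuity of $s_{\vcentr_i,\vshrpns_i,\nu}$ up to the boundary (established in the Remark following Eq.~\eqref{eqn:layerdefinition2}) passes the bound to the endpoint; for two points both lying on $x_1 = 0$ the inequality is trivial, since $s_{\vcentr_i,\vshrpns_i,\nu}$ is identically zero there. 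This yields a single Lipschitz constant $K_i = c_q \cdot L_i$ for $\tilde{s}_i$, validating the use of Lemma~\ref{lem:lemma1}.

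Finally, since $\mcS_{\boldsymbol{\theta},\nu}$ is just the concatenation of the $N$ scalar maps $S_{\vcentr_i,\vshrpns_i,\nu}$, I combine the per-coordinate bounds. Measuring the output in the Euclidean norm on $(\R_0^+)^N$, I get $\|\mcS_{\boldsymbol{\theta},\nu}(\mcD) - \mcS_{\boldsymbol{\theta},\nu}(\mcE)\|_2 \le \big(\sum_{i=0}^{N-1} K_i^2\big)^{1/2}\, \wasserstein_1^q(\mcD,\mcE)$, so $\mcS_{\boldsymbol{\theta},\nu}$ is Lipschitz continuous w.r.t. $\wasserstein_1^q$ with constant $\big(\sum_{i=0}^{N-1} K_i^2\big)^{1/2}$, as claimed. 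The whole argument is thus routine modulo the two prior lemmas, with the boundary-extension step being the only place requiring care.
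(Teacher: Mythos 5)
Your proposal is correct and follows essentially the same route as the paper's proof: use Lemma~\ref{lem:boundedfirstorderderivatives} to get Lipschitz continuity of $s_{\vcentr,\vshrpns,\nu}$, compose with the isometry $\rho$ to verify the hypotheses of Lemma~\ref{lem:lemma1}, and conclude coordinate-wise for the concatenation. You actually supply more detail than the paper at the one delicate point --- extending the Lipschitz bound from $\R\times\R^+$ to the boundary line $x_1=0$ via the mean value theorem on segments and continuity up to the boundary --- which the paper passes over with the word ``immediately.''
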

\vspace{-10pt}
\begin{proof}
  Lemma~\ref{lem:boundedfirstorderderivatives} immediately implies that 
  $s_{\vcentr, \vshrpns, \nu}$  from Eq.~\eqref{eqn:structuredef} is 
  Lipschitz continuous w.r.t $||\cdot||_q$. 
  Consequently, $s = s_{\vcentr, \vshrpns, \nu} \circ \rho$ satisfies 
  property \ref{lem:lemma1:prop1} from Lemma~\ref{lem:lemma1}; property \ref{lem:lemma1:prop2} 
  from Lemma~\ref{lem:lemma1} is satisfied by construction.
  Hence, $S_{\vcentr, \vshrpns, \nu}$ is Lipschitz continuous w.r.t. $\wasserstein_1^q$. 
  Consequently, $\mcS_{\boldsymbol{\theta}, \nu}$ is Lipschitz in each coordinate 
  and therefore Liptschitz continuous.
\end{proof}

Interestingly, the stability result of Theorem~\ref{cor:stability} is 
comparable to the stability results in \cite{Adams17a} or \cite{Reininghaus14a} (which are 
also w.r.t. $\wasserstein_1^q$ and in the setting of diagrams with finitely-many points).
However, contrary to previous works, if we would chop-off the input layer after network training, 
we would then have a mapping $\mcS_{\boldsymbol{\theta}, \nu}$ of 
persistence diagrams that is \emph{specifically-tailored 
to the learning task} on which the network was trained. 

\section{Experiments}
\label{section:experiments}

To demonstrate the versatility of the proposed approach, we present experiments
with two totally different types of data: (1) 2D shapes of objects, represented 
as binary images and (2) social network graphs, given by their adjacency matrix.
In both cases, the learning task is \emph{classification}. 
In each experiment we ensured a balanced group size (per label) and used a 90/10 random training/test split; all reported results are averaged over five runs with fixed $\nu = 0.1$. 
In practice, points in input diagrams were thresholded at $0.01$ for computational reasons.
Additionally, we conducted a reference experiment on all datasets using simple vectorization 
(see Sec.~\ref{subsection:vectorization}) 
of the persistence diagrams in combination with a linear SVM. 

\noindent
\textbf{Implementation}. All experiments were implemented in \texttt{PyTorch}\footnote{\url{https://github.com/pytorch/pytorch}}, using  
\texttt{DIPHA}\footnote{\url{https://bitbucket.org/dipha/dipha}} and \texttt{Perseus}~\cite{Perseus_MischaikowK13}. 
Source code is publicly-available at \url{https://github.com/c-hofer/nips2017}.

\subsection{Classification of 2D object shapes}
\label{subsection:shapeclassification}

\begin{figure}[!t]
 \begin{center}
  \includegraphics[width=0.85\textwidth]{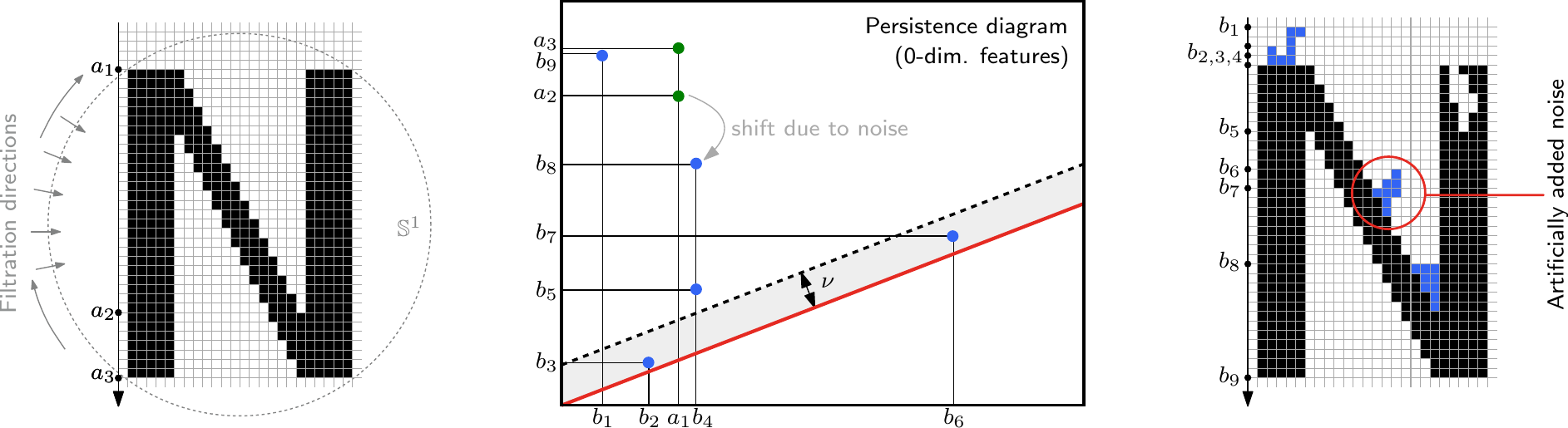}
  \caption{Height function filtration of a ``clean'' (\emph{left}, \textcolor{darkgreen}{green} points) and a ``noisy'' 
  (\emph{right}, \textcolor{babyblue}{blue} points) shape along direction 
  $\mathbf{d} = (0, -1)^\top$. This example demonstrates the insensitivity of homology towards noise, as the 
  added noise only (1) slightly shifts the dominant points (upper left corner) 
  and (2) produces additional points close to the diagonal, which have little 
  impact on the Wasserstein distance and the output of our layer.  
  \label{fig:npht}}
 \end{center}
\end{figure}

We apply persistent homology combined with our proposed input layer to two 
different datasets of binary 2D object shapes: (1) the \texttt{Animal} dataset, 
introduced in \cite{Bai09a} which consists of 
20 different animal classes, 100 samples each; (2) the \texttt{MPEG-7} 
dataset which consists of 70 classes of different object/animal contours, 
20 samples each (see \cite{Latecki00a} for more details).

\noindent
\textbf{Filtration.} 
The requirements to use persistent homology on 2D shapes are twofold: \emph{First}, we need
to assign a simplicial complex to each shape; \emph{second}, we need to appropriately
filtrate the complex. While, in principle, we could analyze contour features, 
such as curvature, and choose a sublevel set filtration based on that, such a strategy 
requires substantial preprocessing of the discrete data (e.g., smoothing). Instead, 
we choose to work with the raw pixel data and leverage the \emph{persistent homology 
transform}, introduced by Turner et al. \cite{Turner2013}. The filtration in that
case is based on sublevel sets of the \emph{height function}, computed from multiple 
directions (see Fig.~\ref{fig:npht}). Practically, this means that we \emph{directly construct a simplicial 
complex from the binary image}.
We set $K_0$ as the set of all pixels which are contained in the object. 
Then, a 1-simplex $[\mathbf{p}_0, \mathbf{p}_1]$ is in the 1-skeleton $K_1$ iff 
$\mathbf{p}_0$ and $\mathbf{p}_1$ are 4--neighbors on the pixel grid.
To filtrate the constructed complex, we define by $\mathbf{b}$ the barycenter of
the object and with $r$ the radius of its bounding circle around $\mathbf{b}$. Finally,  
we define, for $[\mathbf{p}] \in K_0$ and $\mathbf{d} \in \bbS^1$, 
the filtration function by $f([p]) = \nicefrac{1}{r}\cdot \inprod{\mathbf{p} - \mathbf{b}}{\mathbf{d}}$. 
Function values are lifted to $K_1$ by taking the maximum, cf. Sec.~\ref{section:background}.
Finally, let $\mathbf{d}_i$ be the 32 equidistantly distributed directions in $\bbS^1$,  
starting from $(1,0)$.
For each shape, we get 
a vector of persistence diagrams $(\mcD_i)_{i=1}^{32}$ where $\mcD_i$ is the 
0-th diagram obtained by filtration along $\mathbf{d}_i$. 
As most objects do not differ in homology groups of \emph{higher} dimensions (>\ 0), we
did not use the corresponding persistence diagrams. 

\noindent
\textbf{Network architecture.} While the full network is listed in the 
\emph{supplementary material} (Fig. \ref{fig:shape_network_arch}), the key architectural choices are: $32$ 
independent input branches, i.e., one for each filtration direction. 
Further, the $i$-th branch gets, as input, the vector of persistence diagrams from directions $\mathbf{d}_{i-1}, \mathbf{d}_i$ and $\mathbf{d}_{i+1}$. This is a straightforward approach to capture 
dependencies among the filtration directions. We use cross-entropy loss to train 
the network for $400$ epochs, using stochastic gradient descent (SGD) with 
mini-batches of size $128$ and an initial learning rate of $0.1$ (halved every 
$25$-th epoch). 
\noindent
\textbf{Results.} Fig.~\ref{fig:shaperesults} shows a selection of 
2D object shapes from both datasets, together with the obtained 
classification results. We list the two best ($\dagger$) and two worst 
($\ddagger$) results as reported in \cite{Wang2014}.
While, on the one hand, using topological signatures is below the state-of-the-art, the proposed architecture is still better than other approaches that are 
specifically tailored to the problem. 
Most notably, our approach \emph{does not} require any specific data 
preprocessing, whereas all other competitors listed in Fig.~\ref{fig:shaperesults} require, e.g., some sort of contour extraction. Furthermore, the proposed
architecture readily generalizes to 3D with the only difference that in this case $\mathbf{d}_i \in \mathbb{S}^2$. 
Fig.~\ref{tbl:vectorization} (\emph{Right}) shows an exemplary visualization of the position of the learned structure elements for the 
\texttt{Animal} dataset.


\subsection{Classification of social network graphs}
\label{subsection:graphclassification}

 \begin{figure}[!t]
 \footnotesize
    \centering
    \includegraphics[width=0.57\textwidth]{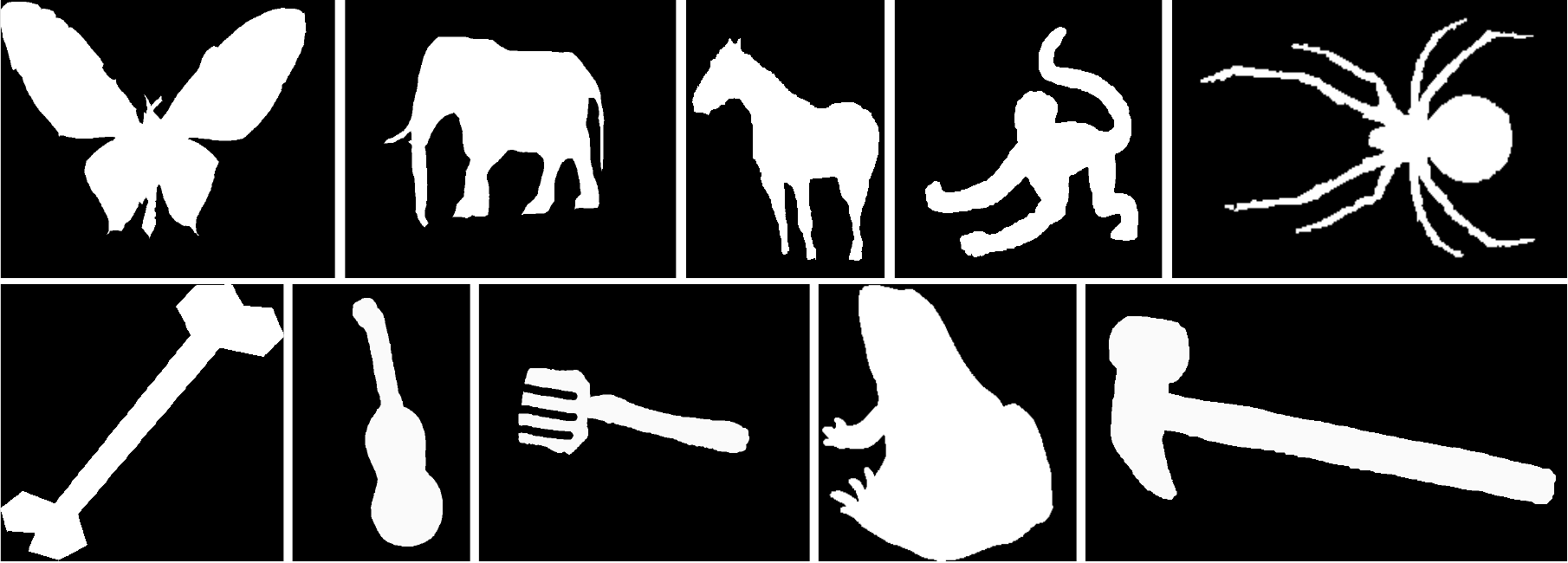}
    \hfill
   \begin{tabular}[b]{lcc}
   \toprule
 & \texttt{MPEG-7} & \texttt{Animal} \\
\midrule
$^\ddagger$Skeleton paths				& $86.7$ & $67.9$\\
$^\ddagger$Class segment sets			& $90.9$ & $69.7$ \\
$^\dagger$ICS 		& $96.6$ & $78.4$ \\
$^\dagger$BCF 			& $97.2$ & $83.4$ \\
\midrule
\textbf{Ours}							& $91.8$ & $69.5$ \\
\bottomrule
\end{tabular}
\caption{\emph{Left}: some examples from the \texttt{MPEG-7} (\emph{bottom}) 
and \texttt{Animal} (\emph{top}) datasets. \emph{Right}: Classification results,
compared to the two best ($\dagger$) and two worst ($\ddagger$) results reported in \cite{Wang2014}.\label{fig:shaperesults}}
\end{figure}

In this experiment, we consider the problem of graph classification, 
where vertices are unlabeled and edges are undirected. That is, a 
graph $\mathcal{G}$ is given by $\mathcal{G}=(V,E)$, where $V$ denotes the set of vertices 
and $E$ denotes the set of edges. 
We evaluate our approach on the challenging problem of 
social network classification, using the two largest 
benchmark datasets from \cite{Yanardag15a}, i.e., \texttt{reddit-5k} (5 classes, 5k graphs) and \texttt{reddit-12k} (11 classes, $\approx$12k graphs). 
Each sample in these datasets represents a discussion graph and the classes indicate 
\emph{subreddits} (e.g., \emph{worldnews}, \emph{video}, etc.).

\noindent
\textbf{Filtration.} 
The construction of a simplicial complex from $\mathcal{G} = (V, E)$ is straightforward: 
we set $K_0 = \{[v] \in V\}$ and $K_1 = \{[v_0, v_1]: \{v_0, v_1\} \in E\}$. We choose a very simple filtration based on the 
\emph{vertex degree}, i.e., the number of incident edges to a vertex 
$v \in V$. Hence, for $[v_0] \in K_0$ we get 
$f([v_0]) = \deg(v_0)/\max_{v \in V} \deg(v)$ and again lift $f$ to $K_1$ by taking the maximum. 
Note that chain groups are trivial for dimension $>1$, hence, all features in dimension $1$ are \emph{essential}.

\noindent
\textbf{Network architecture.} Our network has four input branches: two for 
each dimension ($0$ and $1$) of the homological features, split into \emph{essential} and 
\emph{non-essential} ones, see Sec.~\ref{section:background}. We train 
the network for $500$ epochs using SGD and cross-entropy loss with an initial learning 
rate of $0.1$ (\texttt{reddit\_5k}), or $0.4$ (\texttt{reddit\_12k}). The full network 
architecture is listed in the \emph{supplementary material} (Fig. \ref{fig:graph_network_arch}).

\noindent
\textbf{Results.} Fig.~\ref{fig:graphresults} (\emph{right}) compares 
our proposed strategy to state-of-the-art approaches from the literature. 
In particular, we compare against (1) the graphlet kernel (GK) 
and deep graphlet kernel (DGK) results from \cite{Yanardag15a}, 
(2) the Patchy-SAN (PSCN) results from \cite{Niepert16a} and (3) 
a recently reported graph-feature + random forest approach (RF) from \cite{Barnett16a}. 
As we can see, using topological signatures in our proposed
setting considerably outperforms the current 
state-of-the-art on both datasets.
This is an interesting observation, as PSCN \cite{Niepert16a} for instance,  
also relies on node degrees and an extension of the convolution operation to graphs.
Further, the results reveal that including \emph{essential} features is key
to these improvements.

 
\subsection{Vectorization of persistence diagrams}
\label{subsection:vectorization}
Here, we briefly present a reference experiment we conducted following Bendich et al. \cite{Bendich2016}.
The idea is to directly use the persistence diagrams as features via \emph{vectorization}. 
For each point $(b, d)$ in a persistence diagram $\mcD$ we calculate its \emph{persistence}, i.e., 
$d - b$. We then sort the calculated persistences by magnitude from high to low and take the first $N$ 
values. Hence, we get, for each persistence diagram, a vector of dimension $N$ (if $|\mcD\setminus \Delta| < N$, 
we pad with zero). We used this technique on all four data sets. As can be seen from the results  in 
Table~\ref{tbl:vectorization} (averaged over 10 cross-validation runs), vectorization performs poorly on \texttt{MPEG-7} and \texttt{Animal} but 
can lead to competitive rates on \texttt{reddit-5k} and  \texttt{reddit-12k}. Nevertheless, the obtained 
performance is considerably inferior to our proposed approach.

 \begin{figure}[t!]
 \footnotesize    
   \begin{tabular}[b]{lccccccr}
   \toprule
 & \multicolumn{6}{c}{$N$} & \multirow{2}{*}{\bf Ours}\\
 & 5 & 10 & 20 & 40 & 80 & 160 & \\
\midrule
\texttt{MPEG-7}		&  $81.8$ & $82.3$ & $79.7$ & $74.5$ & $68.2$ &$64.4$&$\mathbf{91.8}$\\
\texttt{Animal} 		&  $48.8$ & $50.0$ & $46.2$ & $42.4$ & $39.3$ &$36.0$&$\mathbf{69.5}$ \\
\texttt{reddit-5k} 	&  $37.1$ & $38.2$ & $39.7$ & $42.1$ & $43.8$ &$45.2$&$\mathbf{54.5}$\\
\texttt{reddit-12k}	&  $24.2$ & $24.6$ & $27.9$ & $29.8$ & $31.5$ &$31.6$&$\mathbf{44.5}$ \\
\bottomrule
\end{tabular}
\hfill
\includegraphics[height=2.5cm]{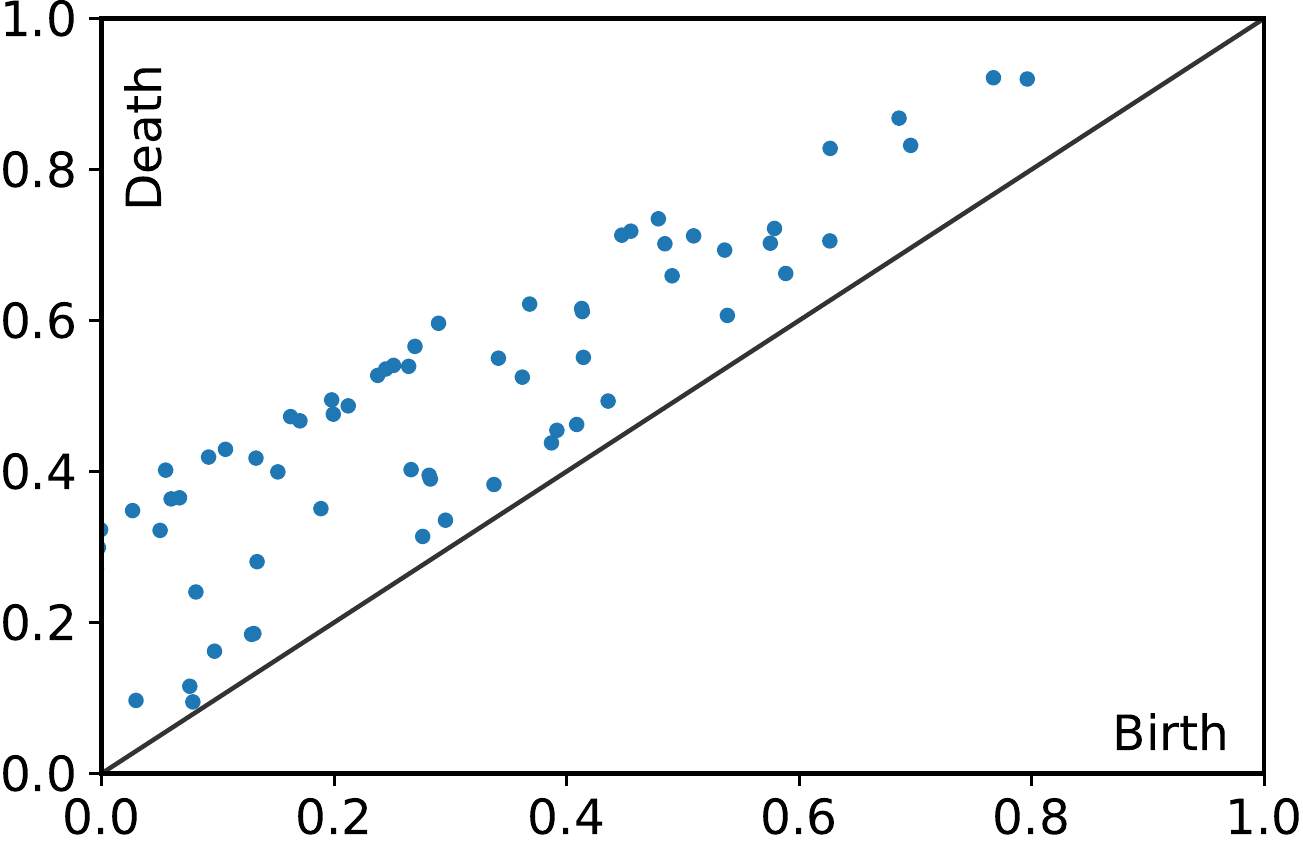}
\caption{  
\label{tbl:vectorization} \emph{Left}: Classification accuracies for a linear SVM trained on vectorized
(in $\mathbb{R}^N$) persistence diagrams (see Sec.~\ref{subsection:vectorization}). \emph{Right}: Exemplary visualization of the learned structure elements (in $0$-th dimension) for the \texttt{Animal} dataset and filtration direction 
$\mathbf{d} = (-1,0)^\top$. 
Centers of the learned elements are marked in \textcolor{blue}{blue}.}
\end{figure}

\begin{figure}[t!]
\vspace*{0.25cm}
 \footnotesize
    \centering
    \includegraphics[width=0.45\textwidth]{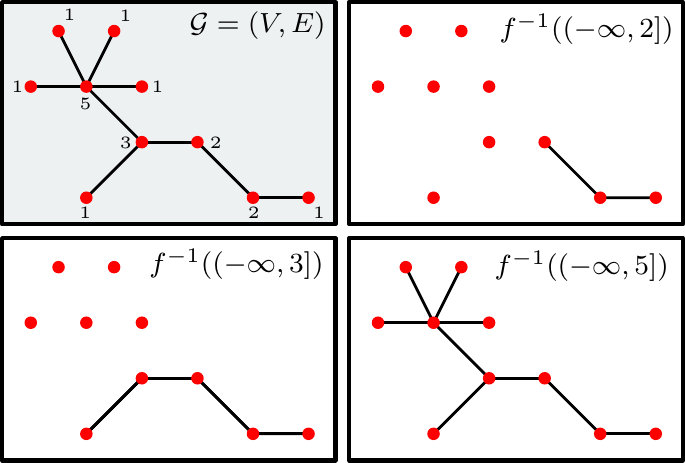}
    \hfill
   \begin{tabular}[b]{lcc}
\toprule
& \texttt{reddit-5k} & \texttt{reddit-12k} \\
\midrule
GK \cite{Yanardag15a}	        & $41.0$ & $31.8$ \\
DGK  \cite{Yanardag15a}			& $41.3$ & $32.2$ \\
PSCN \cite{Niepert16a}			& $49.1$ & $41.3$ \\
RF \cite{Barnett16a}				& $50.9$ & $42.7$ \\
\midrule
\textbf{Ours} (w/o essential)	&$49.1$& $38.5$ \\
\textbf{Ours} (w/ essential)	& $\mathbf{54.5}$ & $\mathbf{44.5}$ \\
\bottomrule
\end{tabular}
\caption{\emph{Left}: Illustration of graph filtration by vertex degree, 
i.e., $f \equiv \deg$ (for different choices of $a_i$, see Sec.~\ref{section:background}). \emph{Right}: Classification results 
as reported in \cite{Yanardag15a} for GK and DGK, Patchy-SAN (PSCN) as reported in 
\cite{Niepert16a} and feature-based random-forest (RF)
classification from \cite{Barnett16a}. \label{fig:graphresults}}.
\end{figure}

Finally, we remark that in both experiments, tests with the kernel of \cite{Reininghaus14a}
turned out to be computationally impractical, (1) on shape data due to the need to evaluate 
the kernel for all filtration directions and (2) on graphs due the large number of samples
and the number of points in each diagram.

\vspace{-5pt}
\section{Discussion}
\label{section:discussion}

We have presented, to the best of our knowledge, the first approach 
towards learning \emph{task-optimal} stable representations of topological 
signatures, in our case persistence diagrams. Our particular 
realization of this idea, i.e., as an input layer to deep neural networks, 
not only enables us to learn with topological signatures, but also to
use them as additional (and potentially complementary) inputs to 
existing deep architectures. From a theoretical point of view, we remark that the 
presented \emph{structure elements} are not restricted to 
exponential functions, so long as the conditions of Lemma~\ref{lem:lemma1} 
are met. One drawback of the proposed approach, however, is the artificial 
bending of the persistence axis (see Fig.~\ref{fig:idea}) by a logarithmic 
transformation; 
in fact, other strategies might be possible and better
suited in certain situations. A detailed investigation of this issue is left for future work. 
From a practical perspective, it is also worth pointing out that, 
in principle, the proposed layer could be used to handle any kind of input that 
comes in the form of multisets (of $\R^n$), whereas previous works only allow to 
handle sets of fixed size (see Sec.~\ref{section:intro}).
In summary, we argue that our experiments
show strong evidence that topological features of data can be beneficial 
in many learning tasks, not necessarily to replace existing inputs, but 
rather as a complementary source of discriminative information. 
\clearpage

\appendix
\section{Technical results}
\begin{lem}
\label{lem:exponential_limits}
  Let $\alpha \in \R^+$, $\beta \in \R$, $\gamma \in \R^+$. We have
  \begin{tasks}[counter-format = {tsk[r])}](2)
  \task\label{lem:exponential_limits:1} $\lim\limits_{x \rightarrow 0} \frac{\ln(x)}{x} \cdot e^{- \alpha (\ln(x)\gamma + \beta)^2} = 0$
  \task\label{lem:exponential_limits:2} $\lim\limits_{x \rightarrow 0} \frac{1}{x} \cdot e^{- \alpha (\ln(x)\gamma + \beta)^2} = 0\enspace.$
  \end{tasks}
\end{lem}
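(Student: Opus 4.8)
The plan is to reduce both limits to one elementary fact: a downward-parabolic exponent dominates any polynomial prefactor. The natural device is the substitution $t = \ln(x)$, i.e. $x = e^t$, under which the (necessarily one-sided) limit $x \to 0^+$ becomes $t \to -\infty$ and $\tfrac{1}{x} = e^{-t}$; note that $x \to 0^+$ is the only meaningful limit here, since $\ln(x)$ forces $x > 0$.

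First I would rewrite both expressions in the variable $t$. The second limit becomes $\lim_{t \to -\infty} \exp\bigl(-t - \alpha(\gamma t + \beta)^2\bigr)$ and the first becomes $\lim_{t \to -\infty} t\,\exp\bigl(-t - \alpha(\gamma t + \beta)^2\bigr)$. Expanding the exponent gives $-t - \alpha(\gamma t + \beta)^2 = -\alpha\gamma^2 t^2 - (2\alpha\gamma\beta + 1)t - \alpha\beta^2$, a quadratic in $t$ whose leading coefficient $-\alpha\gamma^2$ is strictly negative because $\alpha, \gamma \in \R^+$.

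For the second limit I would simply note that, since the quadratic term dominates the linear one, the exponent tends to $-\infty$ as $t \to -\infty$, so the exponential tends to $0$. For the first limit the only additional ingredient is the prefactor $t$, which grows linearly in magnitude and is therefore overwhelmed by the Gaussian-type decay. I would make this precise either by invoking the standard fact that $s^k e^{-cs^2} \to 0$ as $s \to \infty$ for every $k \ge 0$ and $c > 0$ (taking $s = -t$, $c = \alpha\gamma^2$, and absorbing the lower-order terms), or, more self-containedly, by fixing $\delta \in (0, \alpha\gamma^2)$ and using the elementary bound $|t| \le C_\delta\, e^{\delta t^2}$ to reduce the first limit to an exponent with leading coefficient $-(\alpha\gamma^2 - \delta) < 0$, which again diverges to $-\infty$.

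There is no genuine obstacle here; the statement is a routine calculus estimate, and the substitution $t = \ln(x)$ does essentially all the work. The only points deserving a word of care are that the limit is intrinsically one-sided, and that for the first limit one must actually check that the negative quadratic exponent beats the linear prefactor rather than merely assert that ``exponentials dominate polynomials.'' Both follow at once from the sign of the leading coefficient $-\alpha\gamma^2$.
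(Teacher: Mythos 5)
Your proof is correct, but it takes a different route from the paper's. You substitute $t = \ln(x)$, expand the exponent into an explicit downward-opening quadratic $-\alpha\gamma^2 t^2 - (2\alpha\gamma\beta+1)t - \alpha\beta^2$, and then invoke (or re-derive via the bound $|t| \le C_\delta e^{\delta t^2}$) the standard fact that Gaussian decay dominates a polynomial prefactor. The paper instead stays in the variable $x$: it writes $\tfrac{1}{x} = e^{-\ln(x)}$ to fold everything into a single quotient $\ln(x) \big/ e^{\alpha(\ln(x)\gamma+\beta)^2 + \ln(x)}$ and applies de l'H\^opital's rule once, after which the factors of $\tfrac{1}{x}$ cancel and the remaining expression visibly tends to $0$. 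Both arguments hinge on the same sign observation --- the leading coefficient $\alpha\gamma^2 > 0$ of the quadratic in $\ln(x)$ --- but your substitution makes the structure more transparent and avoids l'H\^opital entirely, at the cost of appealing to (or separately proving) the domination lemma $s^k e^{-cs^2} \to 0$. One further small difference: the paper proves only statement (i) and observes that (ii) follows immediately (for $x < e^{-1}$ one has $1 \le |\ln(x)|$, so (ii) is dominated by (i)), whereas you prove (ii) directly first and then handle the extra factor $t$ in (i); either ordering is fine. Your remark that the limit is intrinsically one-sided ($x \to 0^+$) is a point the paper leaves implicit.
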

\begin{proof}\renewcommand{\qedsymbol}{}
We omit the proof for brevity (see \emph{supplementary material} for details), but 
remark that only \ref{lem:exponential_limits:1}
needs to be shown as \ref{lem:exponential_limits:2} follows immediately.
\end{proof}

\section{Proofs}
\label{appendix:proofs} 

\begin{proof}[Proof of Lemma~\ref{lem:lemma1}]

  Let $\varphi$ be a bijection between $\mcD$ and $\mcE$ which realizes $\wasserstein_1^q(\mcD, \mcE)$
  and let $\mcD_0 = \mcD\setminus\Delta,\ \mcE_0=\mcE\setminus\Delta$. 
  To show the result of Eq.~\eqref{thm:stability:1}, we consider the following 
  decomposition:
  \begin{equation}
  \begin{split}
  \mcD & = 
  \varphi^{-1}(\mcE_0) \cup \varphi^{-1}(\Delta) \\ 
  & = \underbrace{(\varphi^{-1}(\mcE_0)\setminus\Delta)}_{A}
  \cup 
  \underbrace{(\varphi^{-1}(\mcE_0) \cap \Delta)}_B
  \cup 
  \underbrace{(\varphi^{-1}(\Delta)\setminus\Delta)}_C
   \cup 
  \underbrace{(\varphi^{-1}(\Delta) \cap \Delta)}_D
  \end{split}
  \end{equation}
  Except for the term $D$, all sets are finite. In fact, 
  $\varphi$ realizes the Wasserstein distance $\wasserstein_1^q$ which implies $\restr{\varphi}{D} = \text{id}$. Therefore, 
  $s(x) = s(\varphi(x)) =0$ for $x\in D$ since $D \subset \Delta$. Consequently, we can ignore $D$ in the summation and it suffices to consider 
$E = A \cup B \cup C$. It follows that 
  \begin{align*} 
  \left|\sum\limits_{x \in \mcD} s(x) - \sum\limits_{y \in \mcE} s(y) \right| 
  &=
  \left|\sum\limits_{x \in \mcD} s(x) - \sum\limits_{x \in \mcD} s(\varphi(x)) \right| 
  =
  \left|\sum\limits_{x \in E} s(x) - \sum\limits_{x \in E} s(\varphi(x)) \right|\\
  &= 
  \left|\sum\limits_{x \in E} s(x) - s(\varphi(x)) \right|
  \leq 
  \sum\limits_{x \in E}| s(x) - s(\varphi(x))|\\
  &\leq K_s \cdot \sum\limits_{x \in E}||x-\varphi(x)||_q 
  =
  K_s \cdot \sum\limits_{x \in \mcD}||x-\varphi(x)||_q 
  = 
  K_s \cdot \wasserstein_1^q(\mcD, \mcE)\enspace.
  \end{align*} 
\end{proof}

\begin{proof}[Proof of Lemma~\ref{lem:boundedfirstorderderivatives}]
  Since $s_{\vcentr, \vshrpns, \nu}$ is defined differently for $x_1 \in [\nu,\infty)$ and $x_1 \in (0,\nu)$, 
  we need to distinguish these two cases. In the following $x_0 \in \R$.
  
  (1) $x_1 \in [\nu,\infty)$: The partial derivative w.r.t. $x_i$ is given as
  \begin{equation}
  \label{lem:s_has_bounded_partials:eq_1}
    \begin{split}
    \left(\frac{\partial}{\partial x_i} s_{\vcentr, \vshrpns, \nu}\right) (x_0, x_1)
    &=
    C \cdot \left(\frac{\partial}{\partial x_i} e^{-\shrpns_i^2(x_i - \centr_i)^2}\right)(x_0, x_1)\\
    &= 
    C \cdot e^{-\shrpns_i^2(x_i - \centr_i)^2}\cdot (-2\shrpns_i^2)(x_i - \centr_i)\enspace,
    \end{split}
  \end{equation}  
  where $C$ is just the part of $\exp(\cdot)$ which is not dependent on $x_i$. For all cases, i.e., 
  $x_0 \rightarrow \infty, x_0 \rightarrow -\infty$
  and
  $x_1 \rightarrow \infty$, it holds that $\text{Eq}.~\eqref{lem:s_has_bounded_partials:eq_1} \rightarrow 0$. 
  
  \vskip1ex
  
  (2) $x_1 \in (0,\nu)$: The partial derivative w.r.t. $x_0$ is similar to Eq.~\eqref{lem:s_has_bounded_partials:eq_1} with the same asymptotic 
  behaviour for $x_0 \rightarrow \infty$ and $x_0 \rightarrow -\infty$. However, for the partial derivative w.r.t. $x_1$ we get
  \begin{equation}
    \label{lem:s_has_bounded_partials:eq_2}
    \begin{split}
    \left(
    \frac{\partial}{\partial x_1} 
    s_{\centr, \shrpns, \nu}
    \right) 
    (x_0, x_1)
    &=
    C \cdot 
    \left(
    \frac{\partial}{\partial x_1} 
    e^{-\shrpns_1^2(\ln(\frac{x_1}{\nu})\nu + \nu - \centr_1)^2}
    \right)
    (x_0, x_1)      \\
    &=
    C \cdot 
    e^{(~\dots)} \cdot 
    (-2\sigma_1^2)\cdot 
    \left(
    \ln\left(\frac{x_1}{\nu}\right)\nu + \nu - \centr_1
    \right)
    \cdot \frac{\nu}{x_1} \\
    &=
    C' \cdot 
    \Big(
    \underbrace{
      e^{(~\dots)} \cdot \left(\ln \left(\frac{x_1}{\nu}\right) \cdot \frac{\nu}{x_1}\right)
      }_{\text{(a)}}
    +
    (\nu - \mu_1) \cdot
    \underbrace{
      e^{(~\dots)} \cdot \frac{1}{x_1}
      }_{\text{(b)}}
    \Big)  \enspace.
    \end{split}
  \end{equation}
  As $x_1\to 0$, we can invoke Lemma~\ref{lem:exponential_limits} \ref{lem:exponential_limits:1} to handle (a) and 
  Lemma~\ref{lem:exponential_limits} \ref{lem:exponential_limits:2} to handle (b); conclusively,  
  $\text{Eq.}~\eqref{lem:s_has_bounded_partials:eq_2} \rightarrow 0$. As the 
  partial derivatives w.r.t. $x_i$ are continuous and their limits are $0$ on
  $\R$, $\R^+$, resp., we conclude that they are \emph{absolutely bounded}.
\end{proof}
\newpage
\setlength{\bibsep}{4pt plus 0.3ex}
{
\small  
\bibliographystyle{plain} 
\bibliography{booksLibrary,libraryChecked}

}

This supplementary material contains technical details that were left-out in the original 
submission for brevity. When necessary, we refer to the submitted manuscript.

\section{Additional proofs}

In the manuscript, we omitted the proof for the following technical lemma. For completeness, the lemma is repeated
and its proof is given below.

\begin{lem}
\label{lem:exponential_limits}
  Let $\alpha \in \R^+$, $\beta \in \R$ and $\gamma \in \R^+$. We have
  \begin{enumerate}[label=(\roman*),ref=(\roman*)]
  \item\label{lem:exponential_limits:1} $\lim\limits_{x \rightarrow 0} \frac{\ln(x)}{x} \cdot e^{- \alpha (\ln(x)\gamma + \beta)^2} = 0$
  \item\label{lem:exponential_limits:2} $\lim\limits_{x \rightarrow 0} \frac{1}{x} \cdot e^{- \alpha (\ln(x)\gamma + \beta)^2} = 0\enspace.$
  \end{enumerate}
\end{lem}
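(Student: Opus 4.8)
The plan is to reduce both limits to the single elementary fact that a Gaussian decay in the exponent overwhelms any polynomial (or exponential-in-$\ln$) growth. First I would note that, because $\ln(x)$ occurs, the limit is necessarily the one-sided limit $x\to 0^+$, so I may assume $x\in(0,1)$. The natural device is the substitution $t=-\ln(x)$, i.e.\ $x=e^{-t}$, under which $x\to 0^+$ corresponds to $t\to+\infty$; then $\ln(x)=-t$, $1/x=e^{t}$, and $\ln(x)\gamma+\beta=-\gamma t+\beta$, so that $(\ln(x)\gamma+\beta)^2=(\gamma t-\beta)^2$.

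For statement (i) the expression becomes
\[
\frac{\ln(x)}{x}\,e^{-\alpha(\ln(x)\gamma+\beta)^2}
= -t\,\exp\!\big(t-\alpha(\gamma t-\beta)^2\big).
\]
Expanding the exponent gives $-\alpha\gamma^2 t^2+(1+2\alpha\gamma\beta)\,t-\alpha\beta^2$, a quadratic in $t$ whose leading coefficient $-\alpha\gamma^2$ is strictly negative since $\alpha,\gamma\in\R^+$. Absorbing the prefactor by writing $|-t|=e^{\ln t}$ (valid for $t>1$), the modulus of the whole expression equals
\[
\exp\!\big(\ln t-\alpha\gamma^2 t^2+(1+2\alpha\gamma\beta)\,t-\alpha\beta^2\big),
\]
and the argument tends to $-\infty$ as $t\to+\infty$ because the $-\alpha\gamma^2 t^2$ term dominates the linear and logarithmic contributions. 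Hence the limit is $0$.

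For statement (ii) I would simply record the identity
\[
\frac{1}{x}\,e^{-\alpha(\ln(x)\gamma+\beta)^2}
= \frac{1}{\ln(x)}\cdot\frac{\ln(x)}{x}\,e^{-\alpha(\ln(x)\gamma+\beta)^2},
\]
which holds for $x$ near $0$ where $\ln(x)\neq 0$. Since $1/\ln(x)\to 0$ as $x\to 0^+$ and the second factor tends to $0$ by (i), the product tends to $0$; this is precisely the sense in which (ii) \emph{follows immediately} from (i).

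There is no genuine obstacle here, as the claim is a routine calculus limit; the only points deserving care are recognizing that the limit is one-sided (so that the substitution $t=-\ln x$ is legitimate) and making the ``Gaussian beats polynomial'' domination explicit by collecting everything into a single exponent so that the sign of the quadratic leading coefficient can be read off. An alternative to the substitution is to set $u=1/x\to+\infty$ and invoke the standard fact that $u^{k}e^{-c(\ln u)^2}\to 0$ for every $k$ and every $c>0$, but the $t=-\ln x$ substitution keeps the quadratic structure of the exponent most transparent and handles both parts uniformly.
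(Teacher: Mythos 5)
Your proof is correct, and it takes a genuinely different route from the paper's. The paper keeps the variable $x$, rewrites the product as the quotient $\ln(x)\cdot\bigl(e^{\alpha(\ln(x)\gamma+\beta)^2+\ln(x)}\bigr)^{-1}$ (an $\infty/\infty$ indeterminate form, since the quadratic term in $\ln(x)$ dominates the exponent of the denominator), and applies de l'H{\^o}pital's rule once; the factors of $1/x$ then cancel and the resulting expression visibly tends to $0$. You instead substitute $t=-\ln(x)$ and collect everything into a single exponential $\exp\bigl(\ln t-\alpha\gamma^2t^2+(1+2\alpha\gamma\beta)t-\alpha\beta^2\bigr)$, reading the conclusion off the negative leading coefficient $-\alpha\gamma^2$. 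Your version is more elementary (no l'H{\^o}pital, hence no need to verify the hypotheses of that rule) and makes the ``Gaussian beats polynomial'' mechanism completely transparent; the paper's version is shorter on the page but hides the quadratic structure inside the differentiation step. You also make explicit why (ii) follows from (i) --- by factoring out $1/\ln(x)\to 0$ --- where the paper simply asserts that the second statement ``follows immediately.'' (A pedantic note that applies to both proofs: since $\alpha,\gamma>0$, one only needs the prefactor's growth to be subdominant to $e^{-\alpha\gamma^2t^2}$, which your single-exponent bookkeeping establishes cleanly for $t>1$, i.e.\ for $x$ sufficiently close to $0$, which is all the limit requires.)
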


\begin{proof}
  We only need to prove the first statement, as the second follows immediately.
  Hence, consider
  \begin{align*}
  \lim\limits_{x \rightarrow 0} \frac{\ln(x)}{x} \cdot e^{- \alpha (\ln(x)\gamma + \beta)^2}
  &= 
  \lim\limits_{x \rightarrow 0}  \ln(x) \cdot e^{-\ln(x)} \cdot e^{- \alpha (\ln(x)\gamma +\beta)^2}\\
  &=
  \lim\limits_{x \rightarrow 0} \ln(x) \cdot e^{-\alpha(\ln(x)\gamma + \beta)^2 - \ln(x)}\\
  &=
  \lim\limits_{x \rightarrow 0} \ln(x) \cdot \left(e^{\alpha(\ln(x)\gamma + \beta)^2 + \ln(x)}\right)^{-1}\\
  &\stackrel{(*)}{=} 
  \lim\limits_{x \rightarrow 0}
  \frac{\partial}{\partial x} \ln(x) \cdot 
  \left(
  \frac{\partial}{\partial x}e^{\alpha(\ln(x)\gamma + \beta)^2 + \ln(x)}
  \right)^{-1}\\
  &=
  \lim\limits_{x \rightarrow 0}
  \frac{1}{x} \cdot 
  \left(
  e^{\alpha(\ln(x)\gamma + \beta)^2 + \ln(x)} 
  \cdot 
  \left(
  2\alpha(\ln(x)\gamma + \beta)\frac{\gamma}{x} + \frac{1}{x}
  \right) 
  \right)^{-1}\\
  &=
  \lim\limits_{x \rightarrow 0}  
  \left(
  e^{\alpha(\ln(x)\gamma + \beta)^2 + \ln(x)} 
  \cdot 
  \left(
  2\alpha(\ln(x)\gamma + \beta)\gamma + 1
  \right) 
  \right)^{-1}\\
  &=
  0
\end{align*}
  where we use de l'H{\^ o}pital's rule in $(*)$. 
\end{proof}

\section{Network architectures}

\noindent
\textbf{2D object shape classification.} Fig.~\ref{fig:shapenetwork} illustrates the network architecture used for \emph{2D object shape classification} in \textcolor{blue}{[Manuscript, Sec.~5.1]}. Note that the persistence diagrams from three consecutive filtration directions $\mathbf{d}_i$ share one input layer. As we use 32 
directions, we have 32 input branches. The convolution operation operates with kernels of size $1\times 1\times 3$ and a stride of $1$. The max-pooling 
operates along the filter dimension. For better readability, we have added the output size of certain layers. We train with the network with stochastic gradient descent (SGD) and a mini-batch
size of 128 for $300$ epochs. Every $20$th epoch, the learning rate (initially set to $0.1$) is halved. 

\begin{figure}[h!]
\begin{mdframed}
\begin{center}
\includegraphics[scale=0.96]{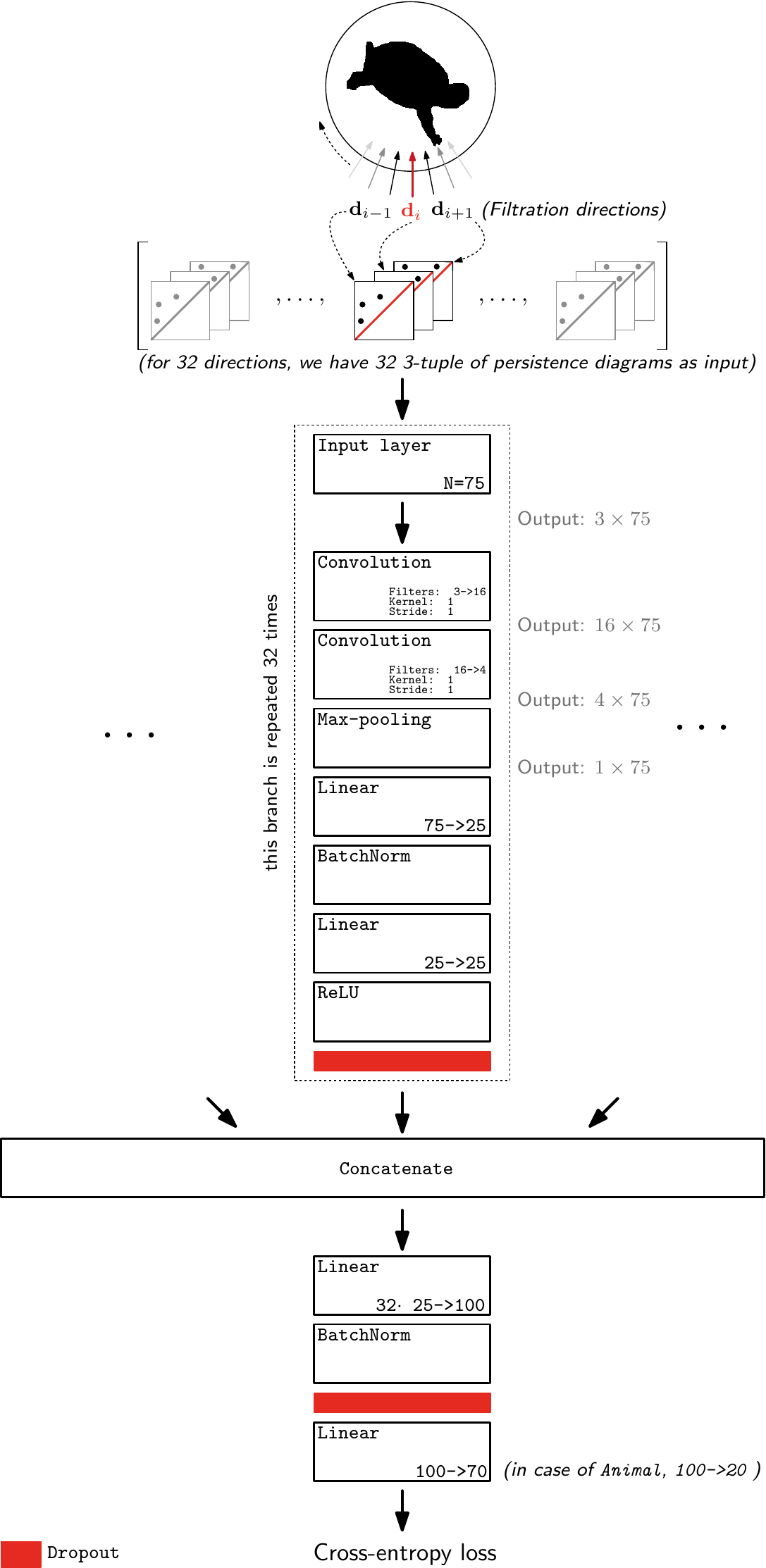}
\end{center}
\end{mdframed}
\caption{\label{fig:shapenetwork}2D object shape classification network architecture.
\label{fig:shape_network_arch}}
\end{figure}

\newpage
\noindent
\textbf{Graph classification.} Fig.~\ref{fig:networkgraph} illustrates the network architecture used for \emph{graph classification} in Sec. \ref{subsection:graphclassification}. In detail, we have 3 input branches: 
first, we split $0$-dimensional features into \emph{essential} and \emph{non-essential} ones; second, since 
there are only \emph{essential} features in dimension 1 (see Sec. \ref{subsection:graphclassification}, \textbf{Filtration}) we do not need a branch for \emph{non-essential} 
features.
 We train the network using SGD with mini-batches of size 128 for 
$300$ epochs. The initial learning rate is set to $0.1$ (\texttt{reddit\_5k}) and $0.4$ (\texttt{reddit\_12k}),
resp., and halved every $20$th epochs.

\begin{figure}[h!]
\begin{mdframed}
\begin{center}
\includegraphics[scale=0.8]{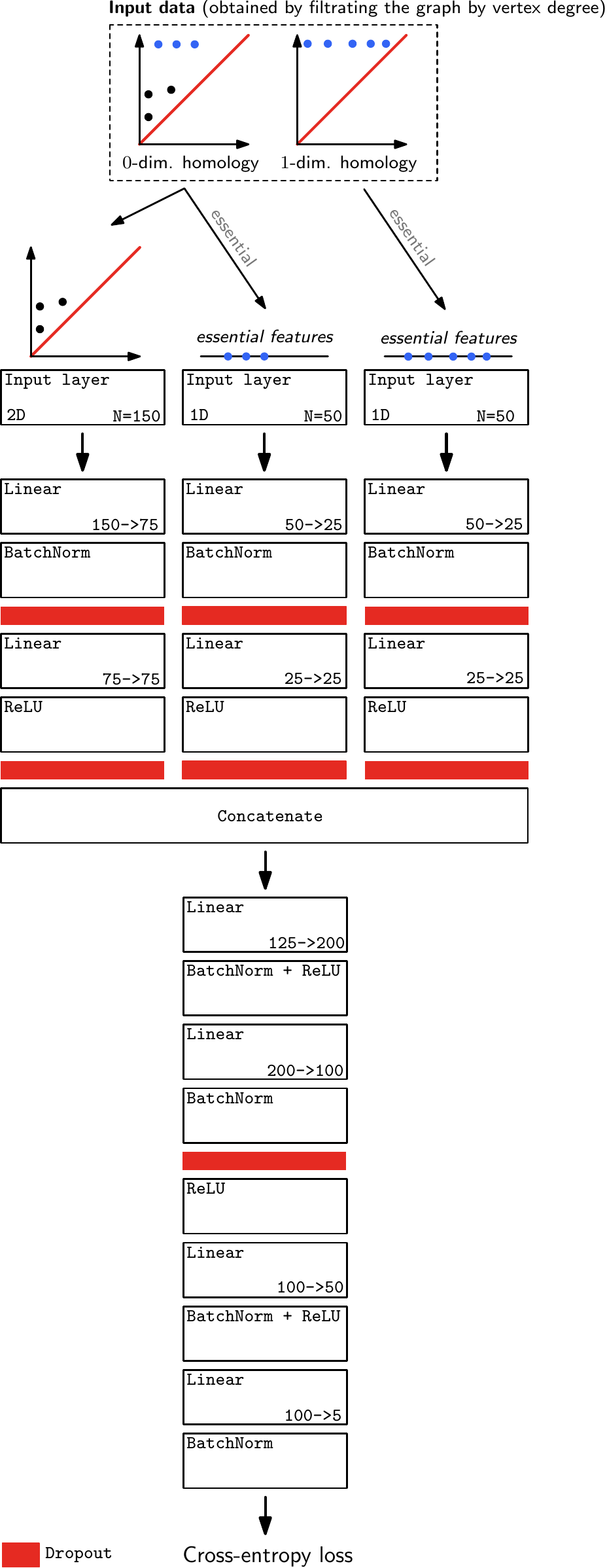}
\end{center}
\end{mdframed}
\caption{\label{fig:networkgraph}Graph classification network architecture.
\label{fig:graph_network_arch}}
\end{figure}

\subsection{Technical handling of essential features}

In case of of 2D object shapes, the death times of essential features are mapped to the max. filtration value and kept in the 
original persistence diagrams. In fact, for \texttt{Animal} and \texttt{MPEG-7}, there is always only one connected component
and consequently only one \emph{essential} feature in dimension $0$ (i.e., it does not make sense to handle this one point in 
a separate input branch).

\vskip1ex
\noindent
In case of social network graphs, essential features are mapped to the real line (using their birth time) and handled in 
separate input branches (see Fig.~\ref{fig:networkgraph}) with 1D structure elements. This is in contrast to the 2D object
shape experiments, as we might have many essential features (in dimensions $0$ \emph{and} $1$) that require handling in separate input branches.

\end{document}